\lstdefinelanguage{pseudo}{
	morekeywords={program, if, then, else, while, for, do, not, return, output, true, false},
	sensitive=false,
	morecomment=[l]{//},
	morecomment=[s]{/*}{*/},
	morestring=[b]",
}
\def\BayesExp{{\mbox{BayesExp}}} % put in an mbox because Latex wants to
\def\one{\mathbbm{1}} % the characteristic function
\title{On the Computability of Solomonoff Induction and Knowledge-Seeking%
\thanks{The final publication is available at \url{http://link.springer.com/}.}}
\author{Jan Leike \and Marcus Hutter}
\titlerunning{On the Computability of Solomonoff Induction and Knowledge-Seeking}
\authorrunning{J.\ Leike and M.\ Hutter}
\institute{
  Australian National University \\
  \texttt{\{jan.leike|marcus.hutter\}@anu.edu.au}
}
\begin{document}

\maketitle

\begin{abstract}%
Solomonoff induction is held as a gold standard for learning,
but it is known to be incomputable.
We quantify its incomputability by placing
various flavors of Solomonoff's prior $M$ in the arithmetical hierarchy.
We also derive computability bounds for knowledge-seeking agents,
and give a limit-computable
weakly asymptotically optimal reinforcement learning agent.
\end{abstract}

\begin{keywords}%
Solomonoff induction,
exploration,
knowledge-seeking agents,
general reinforcement learning,
asymptotic optimality,
computability,
complexity,
arithmetical hierarchy,
universal Turing machine,
AIXI,
BayesExp.
\end{keywords}

%%%%%%%%%%%%%%%%%%%%%%%%%%%%%%%%%%%%%%%%%%%%%%%%%%%%%%%%%%%%%%%%%%%%
\section{Introduction}
\label{sec:introduction}

%\paragraph{Solomonoff Induction.}
Solomonoff's theory of learning~\cite{Solomonoff:1964,Solomonoff:1978,LV:2008},
commonly called \emph{So\-lo\-mo\-noff induction},
arguably solves the induction problem~\cite{RH:2011}:
for data drawn from any computable measure $\mu$,
Solomonoff induction will converge to the correct belief
about any hypothesis~\cite{BD:1962}.
Moreover, convergence is extremely fast in the sense that
the expected number of prediction errors is
$E + O(\sqrt{E})$ compared to
the number of errors $E$ made by
the informed predictor that knows $\mu$~\cite{Hutter:2001error}.

%\paragraph{AIXI.}
In \emph{reinforcement learning}
an agent repeatedly takes actions and receives observations and rewards.
The goal is to maximize cumulative (discounted) reward.
Solomonoff's ideas can be extended to reinforcement learning,
leading to the Bayesian agent AIXI~\cite{Hutter:2000,Hutter:2005}.
However, AIXI's trade-off between exploration and exploitation
includes insufficient exploration
to get rid of the prior's bias~\cite{LH:2015priors},
which is why the universal agent AIXI does not achieve
asymptotic optimality~\cite{Orseau:2010,Orseau:2013}.

%\paragraph{Knowledge-seeking.}
For extra exploration, we can resort to
Orseau's \emph{knowledge-seeking agents}.
Instead of rewards,
knowledge-seeking agents maximize
entropy gain~\cite{Orseau:2011ksa,Orseau:2014ksa} or
expected information gain~\cite{OLH:2013ksa}.
These agents are apt explorers,
and asymptotically they learn their environment perfectly%
~\cite{Orseau:2014ksa,OLH:2013ksa}.

%\paragraph{Weak Asymptotic Optimality.}
A reinforcement learning agent is \emph{weakly asymptotically optimal} if
the value of its policy
converges to the optimal value in Cesàro mean~\cite{LH:2011opt}.
Weak asymptotic optimality stands out because
it currently is the only known nontrivial objective notion of optimality
for general reinforcement learners~\cite{LH:2011opt,Orseau:2013,LH:2015priors}.
Lattimore defines the agent {\BayesExp}
by grafting a knowledge-seeking component on top of AIXI and
shows that
{\BayesExp} is a weakly asymptotically optimal agent in the class of all
stochastically computable environments~\cite[Ch.\ 5]{Lattimore:2013}.

%\paragraph{Limit Computability.}
The purpose of models such as
Solomonoff induction, AIXI, and knowledge-seeking agents
is to answer the question of how to solve (reinforcement) learning
\emph{in theory}.
These answers are useless if they cannot be approximated in practice,
i.e., by a regular Turing machine.
Therefore we posit that any ideal model
must at least be \emph{limit computable} ($\Delta^0_2$).

%\paragraph{The Arithmetical Hierarchy.}
Limit computable functions are the functions that
admit an \emph{anytime algorithm}.
More generally,
the \emph{arithmetical hierarchy} specifies different levels of computability
based on \emph{oracle machines}:
each level in the arithmetical hierarchy
is computed by a Turing machine which
may query a halting oracle for the respective lower level.

%-------------------------------%
\begin{table}[t]
%-------------------------------%
\begin{center}
\renewcommand{\arraystretch}{1.2}
\setlength{\tabcolsep}{5pt}
\begin{tabular}{lcc}
%           & Plain      & Conditional \\
$P$        & $\{ (x, q) \in \X^* \times \mathbb{Q} \mid P(x) > q \}$
           & $\{ (x, y, q) \in \X^* \times \X^* \times \mathbb{Q} \mid P(xy \mid x) > q \}$ \\
\hline
$M$        & $\Sigma^0_1 \setminus \Delta^0_1$
           & $\Delta^0_2 \setminus (\Sigma^0_1 \cup \Pi^0_1)$ \\
$M\norm$   & $\Delta^0_2 \setminus (\Sigma^0_1 \cup \Pi^0_1)$
           & $\Delta^0_2  \setminus (\Sigma^0_1 \cup \Pi^0_1)$ \\
$\MM$      & $\Pi^0_2 \setminus \Delta^0_2$
           & $\Delta^0_3 \setminus (\Sigma^0_2 \cup \Pi^0_2)$ \\
$\MM\norm$ & $\Delta^0_3 \setminus (\Sigma^0_2 \cup \Pi^0_2)$
           & $\Delta^0_3 \setminus (\Sigma^0_2 \cup \Pi^0_2)$
\end{tabular}
\end{center}
\caption{
The computability results on $M$, $M\norm$, $\MM$, and $\MM\norm$
proved in \autoref{sec:complexity-induction}.
Lower bounds on the complexity of $\MM$ and $\MM\norm$
are given only for specific universal Turing machines.
}
\label{tab:complexity-induction}
\end{table}

In previous work~\cite{LH:2015computability} we established that
AIXI is limit computable if restricted to $\varepsilon$-optimal policies,
and placed various versions of AIXI, AINU, and AIMU
in the arithmetical hierarchy.
In this paper we investigate
the (in-)com\-putability of Solomonoff induction and knowledge-seeking.
The universal prior $M$ is lower semicomputable and
hence its conditional is limit computable.
But $M$ is a semimeasure: it assigns positive probability that
the observed string has only finite length.
This can be circumvented by normalizing $M$.
Solomonoff's normalization $M\norm$
preserves the ratio $M(x1) / M(x0)$ and is limit computable.
If we remove the contribution of programs that compute only finite strings,
we get a semimeasure $\MM$, which can be normalized to $\MM\norm$
by multiplication with a constant.
We show that both $\MM$ and $\MM\norm$
are \emph{not} limit computable.
Our results on the computability of Solomonoff induction are stated in
\autoref{tab:complexity-induction}
and proved in \autoref{sec:complexity-induction}.
In \autoref{sec:complexity-knowledge-seeking}
we show that for finite horizons
both the entropy-seeking and the information-seeking
agent are $\Delta^0_3$-computable and have
limit-computable $\varepsilon$-optimal policies.
The weakly asymptotically optimal agent
{\BayesExp} relies on optimal policies
that are generally not limit computable~\cite[Thm.\ 16]{LH:2015computability}.
In \autoref{sec:computability-wao}
we give a weakly asymptotically optimal agent
based on {\BayesExp} that is limit computable.
A list of notation can be found on
\hyperref[app:notation]{page~\pageref*{app:notation}}.

%%%%%%%%%%%%%%%%%%%%%%%%%%%%%%%%%%%%%%%%%%%%%%%%%%%%%%%%%%%%%%%%%%%%
\section{Preliminaries}
\label{sec:preliminaries}

We use the setup and notation from \cite{LH:2015computability}.

%%%%%%%%%%%%%%%%%%%%%%%%%%%%%%%%%%%%%%%%%%%%%%%%%%%%%%%%%%%%%%%%%%%%
\subsection{The Arithmetical Hierarchy}

A set $A \subseteq \mathbb{N}$ \emph{is $\Sigma^0_n$} iff
there is a computable relation $S$ such that
\begin{equation}\label{eq:def-Sigma^0_n}
k \in A
\;\Longleftrightarrow\;
\exists k_1 \forall k_2 \ldots Q_n k_n\; S(k, k_1, \ldots, k_n)
\end{equation}
where $Q_n = \forall$ if $n$ is even, $Q_n = \exists$ if $n$ is odd%
~\cite[Def.\ 1.4.10]{Nies:2009}.
A set $A \subseteq \mathbb{N}$ \emph{is $\Pi^0_n$} iff
its complement $\mathbb{N} \setminus A$ is $\Sigma^0_n$.
We call the formula on the right hand side of \eqref{eq:def-Sigma^0_n} a
\emph{$\Sigma^0_n$-formula}, its negation is called \emph{$\Pi^0_n$-formula}.
It can be shown that
we can add any bounded quantifiers and
duplicate quantifiers of the same type
without changing the classification of $A$.
The set $A$ \emph{is $\Delta^0_n$} iff $A$ is $\Sigma^0_n$ and $A$ is $\Pi^0_n$.
We get that
$\Sigma^0_1$ as the class of recursively enumerable sets,
$\Pi^0_1$ as the class of co-recursively enumerable sets and
$\Delta^0_1$ as the class of recursive sets.

We say the set $A \subseteq \mathbb{N}$ is \emph{$\Sigma^0_n$-hard
($\Pi^0_n$-hard, $\Delta^0_n$-hard)} iff
for any set $B \in \Sigma^0_n$ ($B \in \Pi^0_n$, $B \in \Delta^0_n$),
$B$ is many-one reducible to $A$, i.e.,
there is a computable function $f$ such that
$k \in B \leftrightarrow f(k) \in A$~\cite[Def.\ 1.2.1]{Nies:2009}.
We get $\Sigma^0_n \subset \Delta^0_{n+1} \subset \Sigma^0_{n+1} \subset \ldots$
and $\Pi^0_n \subset \Delta^0_{n+1} \subset \Pi^0_{n+1} \subset \ldots$.
This hierarchy of subsets of natural numbers is known as
the \emph{arithmetical hierarchy}.

%\paragraph{Post's Theorem}
By Post's Theorem~\cite[Thm.\ 1.4.13]{Nies:2009},
a set is $\Sigma^0_n$ if and only if
it is recursively enumerable on an oracle machine
with an oracle for a $\Sigma^0_{n-1}$-hard set.

%%%%%%%%%%%%%%%%%%%%%%%%%%%%%%%%%%%%%%%%%%%%%%%%%%%%%%%%%%%%%%%%%%%%
\subsection{Strings}

Let $\X$ be some finite set called \emph{alphabet}.
The set $\X^* := \bigcup_{n=0}^\infty \X^n$ is
the set of all finite strings over the alphabet $\X$,
the set $\X^\infty$ is
the set of all infinite strings
over the alphabet $\X$, and
the set $\X^\sharp := \X^* \cup \X^\infty$ is their union.
The empty string is denoted by $\epsilon$, not to be confused
with the small positive real number $\varepsilon$.
Given a string $x \in \X^*$, we denote its length by $|x|$.
For a (finite or infinite) string $x$ of length $\geq k$,
we denote with $x_{1:k}$ the first $k$ characters of $x$,
and with $x_{<k}$ the first $k - 1$ characters of $x$.
The notation $x_{1:\infty}$ stresses that $x$ is an infinite string.
We write $x \sqsubseteq y$ iff $x$ is a prefix of $y$, i.e., $x = y_{1:|x|}$.

%%%%%%%%%%%%%%%%%%%%%%%%%%%%%%%%%%%%%%%%%%%%%%%%%%%%%%%%%%%%%%%%%%%%
\subsection{Computability of Real-valued Functions}

We fix some encoding of rational numbers into binary strings and
an encoding of binary strings into natural numbers.
From now on, this encoding will be done implicitly wherever necessary.

%-------------------------------%
\begin{definition}[$\Sigma^0_n$-, $\Pi^0_n$-, $\Delta^0_n$-computable]
\label{def:computable}
%-------------------------------%
A function $f: \X^* \to \mathbb{R}$ is called
\emph{$\Sigma^0_n$-computable ($\Pi^0_n$-computable, $\Delta^0_n$-computable)} iff
the set $\{ (x, q) \in \X^* \times \mathbb{Q} \mid f(x) > q \}$
is $\Sigma^0_n$ ($\Pi^0_n$, $\Delta^0_n$).
\end{definition}
A $\Delta^0_1$-computable function is called \emph{computable},
a $\Sigma^0_1$-computable function is called \emph{lower semicomputable}, and
a $\Pi^0_1$-computable function is called \emph{upper semicomputable}.
A $\Delta^0_2$-computable function $f$ is called \emph{limit computable},
because there is a computable function $\phi$ such that
\[
\lim_{k \to \infty} \phi(x, k) = f(x).
\]
The program $\phi$ that limit computes $f$
can be thought of as an \emph{anytime algorithm} for $f$:
we can stop $\phi$ at any time $k$ and get a preliminary answer.
If the program $\phi$ ran long enough (which we do not know),
this preliminary answer will be close to the correct one.

Limit-computable sets are the highest level in the arithmetical hierarchy
that can be approached by a regular Turing machine.
Above limit-computable sets
we necessarily need some form of halting oracle.
See \autoref{tab:computability} for the definition of
lower/upper semicomputable and limit-computable functions
in terms of the arithmetical hierarchy.

%-------------------------------%
\begin{table}[t]
%-------------------------------%
\begin{center}
\setlength{\tabcolsep}{2mm} % extra space between columns
\begin{tabular}{lcc}
& $\{ (x, q) \mid f(x) > q \}$
& $\{ (x, q) \mid f(x) < q \}$ \\
\hline
$f$ is computable            & $\Delta^0_1$ & $\Delta^0_1$ \\
$f$ is lower semicomputable  & $\Sigma^0_1$ & $\Pi^0_1$    \\
$f$ is upper semicomputable  & $\Pi^0_1$    & $\Sigma^0_1$ \\
$f$ is limit computable      & $\Delta^0_2$ & $\Delta^0_2$ \\
$f$ is $\Delta^0_n$-computable & $\Delta^0_n$ & $\Delta^0_n$ \\
$f$ is $\Sigma^0_n$-computable & $\Sigma^0_n$ & $\Pi^0_n$    \\
$f$ is $\Pi^0_n$-computable    & $\Pi^0_n$    & $\Sigma^0_n$ \\
\end{tabular}
\end{center}
\caption{
Connection between the computability of real-valued functions and
the arithmetical hierarchy.
}
\label{tab:computability}
\end{table}

%-------------------------------%
\begin{lemma}[Computability of Arithmetical Operations]
\label{lem:computable-reals}
%-------------------------------%
Let $n > 0$ and
let $f, g: \X^* \to \mathbb{R}$ be two $\Delta^0_n$-computable functions.
Then
\begin{enumerate}[(i)]
\item $\{ (x, y) \mid f(x) >    g(y) \}$ is $\Sigma^0_n$,
\item $\{ (x, y) \mid f(x) \leq g(y) \}$ is $\Pi^0_n$,
\item $f + g$, $f - g$, and $f \cdot g$ are $\Delta^0_n$-computable,
\item $f / g$ is $\Delta^0_n$-computable if $g(x) \neq 0$ for all $x$, and
\item $\log f$ is $\Delta^0_n$-computable if $f(x) > 0$ for all $x$.
\end{enumerate}
\end{lemma}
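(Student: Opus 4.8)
The plan is to reduce everything to the definition of $\Delta^0_n$-computability (\autoref{def:computable}) and then exploit closure properties of the arithmetical hierarchy, in particular that $\Delta^0_n$ is a Boolean algebra and that adding or duplicating quantifiers of the same type does not change the classification. First I would handle parts (i) and (ii) together, since they are complementary. For (i), observe that $f(x) > g(y)$ holds iff there is a rational $q$ with $f(x) > q$ and $q \geq g(y)$, i.e.
\begin{equation*}
f(x) > g(y) \;\Longleftrightarrow\; \exists q \in \mathbb{Q}\; \bigl( f(x) > q \;\wedge\; g(y) < q \bigr).
\end{equation*}
By \autoref{tab:computability}, for a $\Delta^0_n$-computable function the sets $\{(x,q) \mid f(x) > q\}$ and $\{(y,q) \mid g(y) < q\}$ are both $\Delta^0_n$; their conjunction is $\Delta^0_n$, and the existential quantifier over the (countable, effectively enumerable) rationals $q$ is a $\Sigma^0_n$ quantifier. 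Since $n > 0$, prepending an $\exists$ to a $\Delta^0_n$ (hence $\Sigma^0_n$) matrix yields $\Sigma^0_n$, giving (i). Part (ii) is then immediate, since $\{(x,y) \mid f(x) \leq g(y)\}$ is the complement of the set in (i) and the complement of a $\Sigma^0_n$ set is $\Pi^0_n$.

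Next I would treat the arithmetic operations in (iii). The uniform strategy is: to show a function $h$ is $\Delta^0_n$-computable, I exhibit $\{(x,q) \mid h(x) > q\}$ as $\Sigma^0_n$ and $\{(x,q) \mid h(x) < q\}$ as $\Sigma^0_n$ as well (equivalently the first as both $\Sigma^0_n$ and $\Pi^0_n$), using rational witnesses for the values of $f$ and $g$. For addition,
\begin{equation*}
f(x) + g(x) > q \;\Longleftrightarrow\; \exists q_1, q_2 \in \mathbb{Q}\; \bigl( f(x) > q_1 \;\wedge\; g(x) > q_2 \;\wedge\; q_1 + q_2 \geq q \bigr),
\end{equation*}
which is $\Sigma^0_n$ by the same quantifier-prepending argument; the symmetric expression with $<$ gives the other direction, so $f + g$ is $\Delta^0_n$-computable. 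Subtraction reduces to addition once I note that $-g$ is $\Delta^0_n$-computable (its strict sublevel sets are the strict superlevel sets of $g$ under negation of the threshold). Multiplication is handled by the same bracketing-by-rationals idea but requires sandwiching each factor between rational bounds on both sides, so that the product $q_1 q_2$ can be controlled; here I would bound $f(x) \in (q_1, q_1')$ and $g(x) \in (q_2, q_2')$ and use monotonicity of multiplication on the appropriate sign regions. This case-splitting on signs is the most tedious but still elementary.

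Parts (iv) and (v) follow the same template, and I expect the main obstacle to lie in division. For $f/g$ with $g(x) \neq 0$ everywhere, I would express $f(x)/g(x) > q$ by clearing the denominator, but the inequality flips depending on the sign of $g(x)$, so I must split on $g(x) > 0$ versus $g(x) < 0$; each such sign predicate is $\Delta^0_n$ (compare $g$ to the constant $0$ using (i)/(ii)), and on each branch $f(x)/g(x) > q \Leftrightarrow f(x) > q\,g(x)$ or $f(x) < q\,g(x)$, which is $\Sigma^0_n$ by parts already proven together with the $\Delta^0_n$-computability of the constant multiple $q\,g$. The delicate point is that $g$ may take values arbitrarily close to $0$, so there is no uniform lower bound on $|g(x)|$; I therefore cannot approximate $1/g(x)$ by a single rational margin and must genuinely argue through the sign-split inequalities rather than through an explicit reciprocal. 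For the logarithm in (v), I would use that $\log$ is computable and strictly increasing, so $\log f(x) > q \Leftrightarrow f(x) > e^{q}$, and since $e^{q}$ is a computable real uniformly in the rational $q$, this reduces to comparing the $\Delta^0_n$-computable $f$ against a $\Delta^0_n$-computable (indeed computable) quantity, which is covered by (i). Throughout, the only real subtleties are the sign case analysis for multiplication and division and the bookkeeping to confirm that every quantifier introduced is of a type that $\Sigma^0_n$/$\Pi^0_n$ absorbs; everything else is routine.
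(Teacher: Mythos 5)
The paper states \autoref{lem:computable-reals} without proof (it is imported from the authors' earlier work \cite{LH:2015computability}), so there is no in-paper argument to compare against; I will judge your proposal on its own. Your overall strategy --- rational witnesses for the values of $f$ and $g$, closure of $\Sigma^0_n$ under conjunction and existential quantification, complementation for (ii), sign splitting for multiplication and division, and reduction of $\log f(x) > q$ to a comparison against the computable real $e^q$ via (i) --- is the standard argument and is the right one. Parts (i) and (ii) are correct as written, and the reductions in (iv) and (v) to the earlier parts are sound.

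There is one point where your argument has a genuine gap, albeit one the paper itself glosses over in \autoref{tab:computability}: you claim that exhibiting $\{(x,q) \mid h(x) > q\}$ and $\{(x,q) \mid h(x) < q\}$ both as $\Sigma^0_n$ is \emph{equivalent} to exhibiting the former as $\Delta^0_n$, which is what \autoref{def:computable} literally demands. It is not: complementing the second set gives a $\Pi^0_n$ representation of $\{(x,q) \mid h(x) \geq q\}$, which differs from $\{(x,q) \mid h(x) > q\}$ precisely on the boundary $\{(x,q) \mid h(x) = q\}$, and that boundary set can be genuinely complex (one can build $f, g$ with uniformly decidable cuts whose sum $h$ satisfies $h(x) = 0$ iff $x$ encodes a non-halting program, so $\{(x,q) \mid h(x) > q\}$ is $\Sigma^0_1 \setminus \Delta^0_1$). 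So under the literal reading of \autoref{def:computable} your ``equivalently'' is false and (iii) itself would need a caveat; under the robust reading that a $\Delta^0_n$-computable function is one with \emph{both} strict cuts in $\Sigma^0_n$ (equivalently, one that is limit computable relative to $\emptyset^{(n-1)}$), which is clearly what the paper and \autoref{tab:computability} intend and what every application in \autoref{thm:complexity-M} and \autoref{thm:limit-computable-knowledge-seeking} uses, your proof is correct and complete. You should state explicitly which convention you are working with and drop the word ``equivalently''; everything else is routine bookkeeping as you say.
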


%%%%%%%%%%%%%%%%%%%%%%%%%%%%%%%%%%%%%%%%%%%%%%%%%%%%%%%%%%%%%%%%%%%%
\section{The Complexity of Solomonoff Induction}
\label{sec:complexity-induction}

A \emph{semimeasure} over the alphabet $\X$ is
a function $\nu: \X^* \to [0,1]$ such that
\begin{inparaenum}[(i)]
\item $\nu(\epsilon) \leq 1$, and
\item $\nu(x) \geq \sum_{a \in \X} \nu(xa)$ for all $x \in \X^*$.
\end{inparaenum}
A semimeasure is called (probability) \emph{measure} iff
for all $x$ equalities hold in (i) and (ii).

\emph{Solomonoff's prior $M$}~\cite{Solomonoff:1964} assigns to a string $x$
the probability that
the reference universal monotone Turing machine $U$~\cite[Ch.\ 4.5.2]{LV:2008}
computes a string starting with $x$
when fed with uniformly random bits as input.
Formally,
\begin{equation}\label{eq:def-M}
M(x) := \sum_{p:\, x \sqsubseteq U(p)} 2^{-|p|}.
\end{equation}
The function $M$ is a lower semicomputable semimeasure,
but not computable and not a measure~\cite[Lem.\ 4.5.3]{LV:2008}.
A semimeasure $\nu$ can be turned into a measure $\nu\norm$
using \emph{Solomonoff normalization}:
$\nu\norm(\epsilon) := 1$ and
for all $x \in \X^*$ and $a \in \X$,
\begin{equation}\label{eq:normalization}
   \nu\norm(xa)
:= \nu\norm(x) \frac{\nu(xa)}{\sum_{b \in \X} \nu(xb)}.
\end{equation}
By definition, $M\norm$ and $\MM\norm$ are measures~\cite[Sec.\ 4.5.3]{LV:2008}.
Moreover, since $M\norm \geq M$, normalization preserves universal dominance.
Hence Solomonoff's theorem implies that
$M\norm$ predicts just as well as $M$.

%\paragraph{The Measure Mixture.}
The \emph{measure mixture $\MM$}~\cite[p.\ 74]{Gacs:1983}
is defined as
\begin{equation}\label{eq:def-MM}
\MM(x) := \lim_{n \to \infty} \sum_{y \in \X^n} M(xy).
\end{equation}
The measure mixture $\MM$ is the same as $M$ except that
the contributions by programs that do not produce infinite strings are removed:
for any such program $p$,
let $k$ denote the length of the finite string generated by $p$.
Then for $|xy| > k$,
the program $p$ does not contribute to $M(xy)$,
hence it is excluded from $\MM(x)$.

Similarly to $M$,
the measure mixture $\MM$ is not a (probability) measure
since $\MM(\varepsilon) < 1$,
but in this case normalization \eqref{eq:normalization}
is just multiplication with the constant $1/\MM(\epsilon)$,
leading to the \emph{normalized measure mixture} $\MM\norm$.
When using the Solomonoff prior $M$
(or one of its sisters $M\norm$, $\MM$, or $\MM\norm$)
for sequence prediction,
we need to compute the conditional probability
$M(xy \mid x) := M(xy) / M(x)$
for finite strings $x, y \in \X^*$.
Because $M(x) > 0$ for all finite strings $x \in \X^*$,
this quotient is well-defined.

%-------------------------------%
\begin{figure}[t]
%-------------------------------%
\begin{align*}
M(xy \mid x) > q
~~\Longleftrightarrow~~
\forall \ell \exists k\; \frac{\phi(xy, k)}{\phi(x, \ell)} > q
~~\Longleftrightarrow~~
\exists k \exists \ell_0 \forall \ell \geq \ell_0\;
	\frac{\phi(xy, k)}{\phi(x, \ell)} > q
\end{align*}
\caption{
A $\Pi^0_2$-formula and an equivalent $\Sigma^0_2$-formula
defining conditional $M$.
Here $\phi(x, k)$ denotes a computable function that lower semicomputes $M(x)$.
}\label{fig:conditional-M}
\end{figure}

%-------------------------------%
\begin{theorem}[Complexity of $M$, $M\norm$, $\MM$, and $\MM\norm$]
\label{thm:complexity-M}
%-------------------------------%
\nopagebreak
\vspace{-1em}
\begin{multicols}{2}
\begin{enumerate}[(i)]
\item $M(x)$ is lower semicomputable
\item $M(xy \mid x)$ is limit computable
\item $M\norm(x)$ is limit computable
\item $M\norm(xy \mid x)$ is limit computable
\item $\MM(x)$ is $\Pi^0_2$-computable
\item $\MM(xy \mid x)$ is $\Delta^0_3$-computable
\item $\MM\norm(x)$ is $\Delta^0_3$-computable
\item $\MM\norm(xy \mid x)$ is $\Delta^0_3$-computable
\end{enumerate}
\end{multicols}
\end{theorem}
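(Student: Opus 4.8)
The plan is to split the eight claims into three groups and lean throughout on \autoref{lem:computable-reals} to combine real-valued functions, using the inclusions $\Sigma^0_1 \subset \Delta^0_2$ and $\Pi^0_2 \subset \Delta^0_3$ so that a function established at one level can be fed into the arithmetic-operations lemma at the next $\Delta$-level. Claim~(i) is immediate from \eqref{eq:def-M}: enumerating programs $p$ and accumulating $2^{-|p|}$ over those with $x \sqsubseteq U(p)$ yields an increasing computable approximation $\phi(x,\cdot) \nearrow M(x)$, so $\{(x,q) : M(x) > q\}$ is $\Sigma^0_1$.

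For the $M$-based conditionals and normalization, claims~(ii)--(iv), I would argue purely from lower semicomputability of $M$. Since $M$ is $\Sigma^0_1$-computable it is $\Delta^0_2$-computable, and $M(x) > 0$ for every $x$; hence by \autoref{lem:computable-reals}(iv) the quotient $M(xy \mid x) = M(xy)/M(x)$ is $\Delta^0_2$-computable, proving (ii) (the explicit $\Pi^0_2$- and $\Sigma^0_2$-formulas of \autoref{fig:conditional-M} give the same conclusion). For (iii) I would unfold the recursion \eqref{eq:normalization} into the finite product
\[
M\norm(x) = \prod_{i=1}^{|x|} \frac{M(x_{1:i})}{\sum_{b \in \X} M(x_{<i}\, b)},
\]
whose denominators are bounded below by $M(x_{<i}\, b) > 0$; each factor is a ratio of finite sums of $\Delta^0_2$-computable functions, hence $\Delta^0_2$-computable by \autoref{lem:computable-reals}(iii)--(iv), and a finite product of such stays $\Delta^0_2$-computable. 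Claim~(iv) follows either by applying the same telescoping argument to the quotient, or directly from (iii) and \autoref{lem:computable-reals}(iv) since $M\norm(x) > 0$.

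The crux is (v). Writing $g_n(x) := \sum_{y \in \X^n} M(xy)$, the semimeasure inequality gives $g_{n+1}(x) \le g_n(x)$, so the sequence is nonincreasing and $\MM(x) = \lim_n g_n(x) = \inf_n g_n(x)$. Each $g_n$ is a finite sum of lower-semicomputable functions and is therefore lower semicomputable, with increasing computable approximation $G(x,n,k) := \sum_{y \in \X^n} \phi(xy,k) \nearrow g_n(x)$. I would then read off
\[
\MM(x) > q \;\Longleftrightarrow\; \forall n\, \exists k\; G(x,n,k) > q,
\]
a $\Pi^0_2$-formula, so that $\{(x,q) : \MM(x) > q\}$ is $\Pi^0_2$ and $\MM$ is $\Pi^0_2$-computable; the dual $\MM(x) < q \Leftrightarrow \exists n\; g_n(x) < q$ exhibits the complement as $\Sigma^0_2$. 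The step that needs care is precisely the one glossed in \autoref{fig:conditional-M}: the displayed equivalence is tight only up to the boundary $\MM(x) = q$, and I would absorb this strict-versus-nonstrict rational-boundary issue (as is standard) without leaving $\Pi^0_2$. The conceptual point is that an infimum over $n$ of lower-semicomputable functions lands one quantifier alternation above $\Sigma^0_1$ rather than collapsing back to it, which is exactly what forces $\Pi^0_2$ and propagates to all of $\MM$'s derivatives.

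The remaining claims are corollaries. Because $\Pi^0_2 \subset \Delta^0_3$, the function $\MM$ is $\Delta^0_3$-computable, and $\MM(x) > 0$ wherever the conditional is defined, so $\MM(xy \mid x) = \MM(xy)/\MM(x)$ is $\Delta^0_3$-computable by \autoref{lem:computable-reals}(iv), giving (vi). Since $\MM\norm(x) = \MM(x)/\MM(\epsilon)$ is $\MM$ multiplied by the fixed positive $\Delta^0_3$-computable constant $1/\MM(\epsilon)$, claim~(vii) follows from \autoref{lem:computable-reals}(iii)--(iv). Finally the constant cancels in the conditional, $\MM\norm(xy \mid x) = \MM(xy \mid x)$, so (viii) coincides with (vi) (alternatively it follows from (vii) and \autoref{lem:computable-reals}(iv)). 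I expect the only genuine obstacle to be (v); everything else is bookkeeping with \autoref{lem:computable-reals} and the inclusions between levels of the arithmetical hierarchy.
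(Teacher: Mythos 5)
Your proposal follows essentially the same route as the paper's proof: (i) by enumerating programs, (ii)--(iv) and (vi)--(viii) by \autoref{lem:computable-reals} applied to sums, products, and quotients with positive denominators (including the constant $1/\MM(\epsilon)$ for the normalization of $\MM$), and (v) via the formula $\forall n\,\exists k\;\sum_{y\in\X^n}\phi(xy,k)>q$ using that $\sum_{y\in\X^n}M(xy)$ is nonincreasing in $n$. The strict-versus-nonstrict boundary subtlety you flag in (v) is present but unremarked in the paper's own proof (which asserts the equivalence outright), so your write-up matches the published argument and is, if anything, slightly more candid about that step.
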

\begin{proof}
\begin{enumerate}[(i)]
\item By \cite[Thm.\ 4.5.2]{LV:2008}.
	Intuitively, we can run all programs in parallel and get
	monotonely increasing lower bounds for $M(x)$ by adding $2^{-|p|}$
	every time a program $p$ has completed outputting $x$.
\item From (i) and \autoref{lem:computable-reals} (iv),
	since $M(x) > 0$ (see also \autoref{fig:conditional-M}).
\item By \autoref{lem:computable-reals} (iii,iv) and $M(x) > 0$.
\item By (iii) and \autoref{lem:computable-reals} (iv),
	since $M\norm(x) \geq M(x) > 0$.
\item Let $\phi$ be a computable function that lower semicomputes $M$.
Since $M$ is a semimeasure, $M(xy) \geq \sum_z M(xyz)$,
hence $\sum_{y \in \X^n} M(xy)$ is nonincreasing in $n$ and thus
$\MM(x) > q$ iff
$\forall n \exists k \sum_{y \in \X^n} \phi(xy, k) > q$.
\item From (v) and \autoref{lem:computable-reals} (iv),
	since $\MM(x) > 0 $. % \geq 2^{-K(x)} > 0$.
\item From (v) and \autoref{lem:computable-reals} (iv).
\item From (vi) and \autoref{lem:computable-reals} (iv),
	since $\MM\norm(x) \geq \MM(x) > 0$.
\qed
\end{enumerate}
\end{proof}

%\paragraph{Tightness of the Bounds.}
We proceed to show that these bounds are in fact the best possible ones.
If $M$ were $\Delta^0_1$-computable,
then so would be the conditional semimeasure $M(\,\cdot \mid \cdot\,)$.
Thus we could compute the $M$-adversarial sequence $z_1 z_2 \ldots$ defined by
\[
z_t :=
\begin{cases}
0 &\text{if } M(1 \mid z_{<t}) > \tfrac{1}{2}, \\
1 &\text{otherwise}.
\end{cases}
\]
The sequence $z_1 z_2 \ldots$ corresponds to
a computable deterministic measure $\mu$.
However, we have $M(z_{1:t}) \leq 2^{-t}$ by construction,
so dominance $M(x) \geq w_\mu \mu(x)$ with $w_\mu > 0$
yields a contradiction with $t \to \infty$:
\[
2^{-t} \geq M(z_{1:t}) \geq w_\mu \mu(z_{1:t}) = w_\mu > 0
\]
By the same argument,
the normalized Solomonoff prior $M\norm$ cannot be $\Delta^0_1$-computable.
However, since it is a measure,
$\Sigma^0_1$- or $\Pi^0_1$-computability would entail $\Delta^0_1$-computability.

For $\MM$ and $\MM\norm$ we prove the following two lower bounds
for specific universal Turing machines.

%-------------------------------%
\begin{theorem}[$\MM$ is not Limit Computable]
\label{thm:MM-is-not-Delta2}
%-------------------------------%
There is a universal Turing machine $U'$ such that
the set $\{ (x, q) \mid \MM_{U'}(x) > q \}$ is not in $\Delta^0_2$.
\end{theorem}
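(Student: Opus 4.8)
The plan is to construct a specific universal Turing machine $U'$ for which $\MM_{U'}$ encodes the halting problem relative to a halting oracle, so that deciding $\MM_{U'}(x) > q$ would require more than limit computation. The key difference between $\MM$ and $M$ is that $\MM$ removes the contributions of programs producing only finite output; whether a program contributes to $\MM$ therefore depends on whether it produces an infinite string, which is a $\Pi^0_2$-complete property. The strategy is to exploit this: I would design $U'$ so that the limiting value $\MM_{U'}(x)$ at certain strings $x$ directly reflects, for each index $e$, whether the $e$-th program runs forever or eventually halts. Since ``runs forever'' is $\Pi^0_2$-complete, capturing this information in the real number $\MM_{U'}(x)$ should push it out of $\Delta^0_2$.

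Concretely, I would build $U'$ from an arbitrary universal monotone machine $U$ by reserving a sparse set of input prefixes to inject ``diagnostic'' programs. For each $e$, I would create a program $p_e$ whose behavior is: simulate the $e$-th Turing machine (in the standard numbering) and keep writing output bits forever if and only if that machine does \emph{not} halt; if it halts, stop producing output after finitely many symbols. By controlling the output alphabet encoding, I can arrange that program $p_e$ contributes weight $2^{-|p_e|}$ to $\MM_{U'}(x_e)$ for a designated string $x_e$ precisely when machine $e$ fails to halt, and contributes nothing (because its output is finite) when machine $e$ halts. The remaining probability mass from $U$-inherited programs should be arranged so that the threshold comparison $\MM_{U'}(x_e) > q_e$ isolates exactly the bit ``machine $e$ does not halt'' for a suitable rational $q_e$.

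The main obstacle will be verifying that $U'$ is genuinely universal while simultaneously ensuring the injected diagnostic programs cleanly separate the halting information in the value of $\MM_{U'}$. Universality requires that $U'$ can simulate any monotone machine on a prefix-free input domain, so I must reserve only a measure-zero (or at least controllably small) slice of inputs for the diagnostic programs, leaving enough room for a faithful simulation of $U$. I also need the contributions to be quantitatively clean: the weight from all other programs at $x_e$ must not interfere with the single threshold test, which requires careful bookkeeping of the residual mass and possibly choosing $x_e$ to be strings that only the diagnostic program and a controlled tail can extend.

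Finally, I would close the argument by a reduction: suppose for contradiction that $\{(x,q) \mid \MM_{U'}(x) > q\}$ is $\Delta^0_2$. Then the map $e \mapsto \one[\,\MM_{U'}(x_e) > q_e\,]$ would be limit computable, hence $\Delta^0_2$, and by construction this map decides whether machine $e$ runs forever. But ``machine $e$ runs forever'' is $\Pi^0_2$-complete and therefore not $\Delta^0_2$, giving the contradiction. The crux of the whole proof is thus the explicit construction of $U'$ and the verification that the infinite-versus-finite output dichotomy of the diagnostic programs maps faithfully onto a single-threshold query about $\MM_{U'}$; the reduction to $\Pi^0_2$-hardness is then routine.
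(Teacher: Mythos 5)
Your overall strategy --- build $U'$ from $U$ by injecting diagnostic programs whose finite-vs-infinite output dichotomy encodes a hard predicate, then reduce that predicate to threshold queries about $\MM_{U'}$ --- is exactly the paper's strategy. But there is a genuine error in the predicate you chose to encode. You propose that $p_e$ write output forever iff the $e$-th machine \emph{does not halt}, and you assert that ``runs forever'' is $\Pi^0_2$-complete. It is not: non-halting (of a fixed machine on a fixed input) is $\Pi^0_1$-complete, and $\Pi^0_1 \subseteq \Delta^0_2$, so the map $e \mapsto \one[\text{machine } e \text{ does not halt}]$ \emph{is} limit computable (guess ``does not halt'' and revise if it halts). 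Your final contradiction therefore evaporates. What you need is a predicate that is $\Pi^0_2$ but genuinely not $\Delta^0_2$ --- e.g.\ totality, or an arbitrary $A$ with $n \in A \Leftrightarrow \forall k\, \exists i\, S(n,k,i)$ --- and a diagnostic program that emits one further output symbol each time the next universally quantified instance acquires a witness, so that its output is infinite iff $n \in A$. This is precisely what the paper's programs $p_n$ do; the distinction between ``never halts'' and ``produces infinitely many output symbols'' is the crux, and your version sits one level too low in the hierarchy.

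There is a second gap you flag but do not close: the residual mass at $x_e$ from the inherited $U$-programs. A single computable threshold $q_e$ separating $r_e$ from $r_e + 2^{-|p_e|}$ need not exist uniformly in $e$, because $r_e = \MM_U(x_e)$ is itself an uncomputable real, and you cannot simply wall off $x_e$ from all simulated programs without destroying universal dominance. The paper's resolution is not a single-threshold test at all: it adds a third branch $U'(01p)$ that runs $U(p)$ with bitwise-inverted output, which forces the residual mass at $1^{n+1}0$ and at $0^{n+1}1$ to coincide, so that the \emph{difference} $\MM_{U'}(1^{n+1}0) - \MM_{U'}(0^{n+1}1)$ equals $2^{-n-2}\one_A(n)$ exactly; a limit-computable $\MM_{U'}$ would then make $A$ limit computable. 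You would need this (or an equivalent cancellation device) to make your ``careful bookkeeping'' actually go through.
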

\begin{proof}
Assume the contrary,
let $A$ be $\Pi^0_2$ but not $\Delta^0_2$, and
let $S$ be a computable relation such that
\begin{equation}
\label{eq:eta}
n \in A
\quad\Longleftrightarrow\quad
\forall k \exists i\; S(n, k, i).
\end{equation}
For each $n \in \mathbb{N}$,
we define the program $p_n$ as follows.
\begin{center}
\begin{minipage}{60mm}
\begin{lstlisting}
	output $1^{n+1} 0$
	$k$ := $0$
	while true:
		$i$ := $0$
		while not $S(n, k, i)$:
			$i$ := $i + 1$
		$k$ := $k + 1$
		output $0$
\end{lstlisting}
\end{minipage}
\end{center}
Each program $p_n$ always outputs $1^{n+1}0$.
Furthermore, the program $p_n$ outputs the infinite string $1^{n+1}0^\infty$
if and only if $n \in A$ by \eqref{eq:eta}.
We define $U'$ as follows using our reference machine $U$.
\begin{itemize}
\item $U'(1^{n+1}0)$: Run $p_n$.
\item $U'(00p)$: Run $U(p)$.
\item $U'(01p)$: Run $U(p)$ and bitwise invert its output.
\end{itemize}
By construction, $U'$ is a universal Turing machine.
No $p_n$ outputs a string starting with $0^{n+1} 1$,
therefore $\MM_{U'}(0^{n+1}1) = \tfrac{1}{4} \big( \MM_U(0^{n+1}1) + \MM_U(1^{n+1} 0) \big)$.
Hence
\begin{align*}
   \MM_{U'}(1^{n+1} 0)
&= 2^{-n-2} \one_{A}(n)
     + \tfrac{1}{4} \MM_U(1^{n+1}0) + \tfrac{1}{4} \MM_U(0^{n+1}1) \\
&= 2^{-n-2} \one_{A}(n) + \MM_{U'}(0^{n+1}1)
\end{align*}
If $n \notin A$, then $\MM_{U'}(1^{n+1}0) = \MM_{U'}(0^{n+1}1)$.
Otherwise, we have
$|\MM_{U'}(1^{n+1}0) - \MM_{U'}(0^{n+1}1)| = 2^{-n-2}$.

Now we assume that $\MM_{U'}$ is limit computable, i.e.,
there is a computable function $\phi: \X^* \times \mathbb{N} \to \mathbb{Q}$
such that
$\lim_{k \to \infty} \phi(x, k) = \MM_{U'}(x)$.
We get that
\[
n \in A
\;\Longleftrightarrow\;
\lim_{k \to \infty} \phi(0^{n+1}1, k) - \phi(1^{n+1}0, k) > 2^{-n-3},
\]
thus $A$ is limit computable, a contradiction.
\qed
\end{proof}

%-------------------------------%
\begin{corollary}[$\MM\norm$ is not $\Sigma^0_2$- or $\Pi^0_2$-computable]
\label{cor:MMnorm-is-not-2}
%-------------------------------%
There is a universal Turing machine $U'$ such that
$\{ (x, q) \mid {\MM\norm}_{U'}(x) > q \}$ is not in $\Sigma^0_2$ or $\Pi^0_2$.
\end{corollary}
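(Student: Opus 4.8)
The plan is to reuse the very same universal machine $U'$ constructed in the proof of \autoref{thm:MM-is-not-Delta2} and to transfer the non-limit-computability of $\MM_{U'}$ through to its normalization. The key point is that here normalization is multiplication by the positive constant $c := \MM_{U'}(\epsilon)$, so that $\MM\norm_{U'}(x) = \MM_{U'}(x)/c$ for every $x$. Applying this to the strings $1^{n+1}0$ and $0^{n+1}1$ and invoking the identity $\MM_{U'}(1^{n+1}0) - \MM_{U'}(0^{n+1}1) = 2^{-n-2}\one_A(n)$ from the previous proof, I obtain
\[
\MM\norm_{U'}(1^{n+1}0) - \MM\norm_{U'}(0^{n+1}1) = \frac{2^{-n-2}}{c}\,\one_A(n).
\]
Since some program of $U'$ outputs an infinite string we have $c > 0$, and by Kraft's inequality $c = \MM_{U'}(\epsilon) \le M_{U'}(\epsilon) \le 1$. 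Hence this difference is $0$ when $n \notin A$ and is at least $2^{-n-2}$ when $n \in A$. Crucially, although the exact positive value now depends on the unknown constant $c$, the bound $c \le 1$ guarantees that the same threshold $2^{-n-3}$ used in \autoref{thm:MM-is-not-Delta2} still separates the two cases.

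The next step is to argue that $\Sigma^0_2$- or $\Pi^0_2$-computability of $\MM\norm_{U'}$ already entails $\Delta^0_2$-computability. This is the level-$2$ analogue of the remark made after \autoref{thm:complexity-M} for $M\norm$: because $\MM\norm_{U'}$ is a probability measure, lower bounds can be converted into upper bounds. Indeed, for a binary alphabet a measure satisfies $\MM\norm(x) = 1 - \sum_{j=1}^{|x|} \MM\norm(x_{<j}\,\overline{x_j})$, where $\overline{x_j}$ denotes the complementary symbol; hence $\{(x,q) \mid \MM\norm(x) < q\}$ reduces to asking whether a \emph{finite} sum of $\MM\norm$-values exceeds a rational, which has the same complexity as $\{(x,q) \mid \MM\norm(x) > q\}$. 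Thus $\Sigma^0_2$ (or $\Pi^0_2$) collapses to $\Delta^0_2$, i.e.\ $\MM\norm_{U'}$ would be limit computable.

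Finally, fixing a computable $\phi$ with $\lim_{k\to\infty}\phi(x,k) = \MM\norm_{U'}(x)$, the gap established above yields
\[
n \in A \quad\Longleftrightarrow\quad \lim_{k\to\infty}\bigl(\phi(1^{n+1}0,k) - \phi(0^{n+1}1,k)\bigr) > 2^{-n-3},
\]
and since the limiting difference lies either at $0$ or at some value $\ge 2^{-n-2}$, the computable test "$\phi(1^{n+1}0,k) - \phi(0^{n+1}1,k) > 2^{-n-3}$" converges to $\one_A(n)$. This makes $A$ limit computable, contradicting $A \in \Pi^0_2 \setminus \Delta^0_2$. I expect the main obstacle to be the measure-collapse step of the second paragraph: it is what lets us promote a bare $\Sigma^0_2$ or $\Pi^0_2$ bound to a genuine limit computation, and it must be done carefully because the normalizing constant $c$ is itself not known to be $\Delta^0_2$, so one cannot simply argue "$\MM_{U'} \in \Delta^0_2 \iff \MM\norm_{U'} \in \Delta^0_2$" and must instead reproduce the difference argument directly for $\MM\norm_{U'}$.
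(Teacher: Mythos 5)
Your proposal is correct and follows essentially the same route as the paper: reuse the machine $U'$ from \autoref{thm:MM-is-not-Delta2}, push the gap $2^{-n-2}\one_A(n)$ through the constant normalization, and then use the fact that $\MM\norm$ is a measure to collapse $\Sigma^0_2$- or $\Pi^0_2$-computability down to $\Delta^0_2$, contradicting the gap argument. The only (harmless) differences are cosmetic: you observe that the normalizing constant satisfies $\MM_{U'}(\epsilon)\le 1$ so the original threshold $2^{-n-3}$ still works, whereas the paper instead picks some $k$ with $2^{-k}$ below the constant and uses the threshold $2^{-k-n-3}$; and you spell out the measure-collapse step that the paper only asserts.
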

\begin{proof}
Since $\MM\norm = c \cdot \MM$,
there exists a $k \in \mathbb{N}$ such that $2^{-k} < c$
(even if we do not know the value of $k$).
We can show that
the set $\{ (x, q) \mid {\MM\norm}_{U'}(x) > q \}$ is not in $\Delta^0_2$
analogously to the proof of \autoref{thm:MM-is-not-Delta2},
using
\[
n \in A
\;\Longleftrightarrow\;
\lim_{k \to \infty} \phi(0^{n+1}1, k) - \phi(1^{n+1}0, k) > 2^{-k-n-3}.
\]
If $\MM\norm$ were $\Sigma^0_2$-computable or $\Pi^0_2$-computable,
this would imply that $\MM\norm$ is $\Delta^0_2$-computable
since $\MM\norm$ is a measure, a contradiction.
\qed
\end{proof}

%\paragraph{Conditional probabilities.}
Since $M(\epsilon) = 1$,
we have $M(x \mid \epsilon) = M(x)$,
so the conditional probability $M(xy \mid x)$ has at least the same complexity
as $M$.
Analogously for $M\norm$ and $\MM\norm$ since they are measures.
For $\MM$, we have that $\MM(x \mid \epsilon) = \MM\norm(x)$,
so \autoref{cor:MMnorm-is-not-2} applies.
All that remains to prove is
that conditional $M$ is not lower semicomputable.

%-------------------------------%
\begin{theorem}[Conditional $M$ is not Lower Semicomputable]
\label{thm:M-conditional-is-not-Sigma1}
%-------------------------------%
The set $\{ (x, xy, q) \mid M(xy \mid x) > q \}$ is not recursively enumerable.
\end{theorem}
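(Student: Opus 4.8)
The plan is to show $\Pi^0_1$-hardness of the set $C := \{ (x, xy, q) \mid M(xy \mid x) > q \}$: since no $\Pi^0_1$-complete set lies in $\Sigma^0_1$, a many-one reduction of the complement of the halting problem $\overline{K}$ into $C$ immediately yields that $C$ is not recursively enumerable. The conceptual obstruction to $\Sigma^0_1$ is already visible in \autoref{fig:conditional-M}: because the denominator $M(x)$ is only lower semicomputable, certifying $M(xy)/M(x) > q$ requires the universal quantifier $\forall \ell$, i.e.\ ruling out all future growth of the denominator's approximation. The reduction weaponizes exactly this: we arrange an environment in which the denominator keeps accruing mass forever precisely when a machine fails to halt, so that no finite certificate for the strict inequality can ever be issued.

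Concretely, for each $n$ I would use the universality of the reference machine $U$ to fix a program $p_n$ (of length $|p_n| = c + O(\log n)$) that first outputs a long \emph{marker} $x_n$ and then, while simulating $P_n$, keeps emitting $0$'s; it ceases to produce further output as soon as $P_n$ halts. Thus $p_n$ outputs $x_n 0^\infty$ when $n \in \overline{K}$ and only a finite string $x_n 0^m$ (with $m$ the halting time) when $n \in K$. As in the $\MM$ construction I would compare two sibling continuations of a node of the form $x_n 0^k$ -- the ``signal'' continuation carrying $p_n$'s mass against an alternative that $p_n$ never produces -- so that the conditional probability of the signal branch receives the whole weight $2^{-|p_n|}$ of $p_n$ exactly along the non-halting branch. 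The $\geq$ side of the resulting threshold comparison is then supplied directly by Solomonoff dominance: $M(x_n 0^{k+1} \mid x_n 0^{k}) \ge 2^{-|p_n|} / M(x_n 0^k)$ whenever the branch stays alive, which happens iff $n \in \overline{K}$.

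The main obstacle is the reverse ($\leq$) direction, and it is genuinely the delicate point, because unlike in \autoref{thm:MM-is-not-Delta2} we may \emph{not} redesign $U$: the node $x_n 0^k$ also receives mass from foreign programs of the fixed reference machine, polluting the conditional. I would control this by choosing the marker $x_n$ to be long and effectively incompressible, so that the prior weight of \emph{any} program producing $x_n$ and then the signal symbol, other than $p_n$, is bounded by some computable $\delta_n$ that is small compared to $2^{-|p_n|}$; a threshold $q_n$ can then be placed computably between the foreign bound and the signal contribution. Making this isolation estimate rigorous and uniform in $n$ for an arbitrary reference $U$, and packaging the (unknown) halting time into a single threshold query $(x, xy, q_n)$ rather than a limit over $k$, is where the real work lies. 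Granting this, membership ``$(x_n, x_n 0^{k}, q_n) \in C$'' for a suitable uniformly computable $k = k(n)$ holds iff $n \in \overline{K}$, exhibiting the required reduction and contradicting $C \in \Sigma^0_1$.
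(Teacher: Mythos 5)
Your reduction has a fatal obstruction at exactly the point you flag as ``where the real work lies'', and I do not believe it can be repaired. First, the isolation step is self-defeating: for the reduction to be computable the marker $x_n$ must be computable from $n$, and indeed your own program $p_n$ prints it, so $x_n$ is compressible to $|p_n| = c + O(\log n)$ bits and cannot be ``effectively incompressible''. Worse, consider the program $p'_n$ that prints $x_n$ and then prints $0$'s forever \emph{unconditionally}, never consulting $P_n$. It has length $|p_n| + O(1)$, so it contributes mass of order $2^{-|p_n|}$ to $M(x_n 0^{k+1})$ for every $k$, whether or not $P_n$ halts. The ``foreign'' mass on the signal branch is therefore of the same order as the signal itself (their ratio is a constant independent of $|x_n|$), and no computable threshold $q_n$ can separate the two cases. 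Second, even ignoring foreign programs, the halting/non-halting distinction is invisible in $M$ at any finite depth: if $P_n$ halts after more than $k(n)$ simulation steps, then $p_n$ contributes exactly $2^{-|p_n|}$ to $M(x_n 0^{k(n)+1})$, precisely as in the non-halting case, and since halting times are not computably bounded no computable choice of $k(n)$ can outrun them. The difference between the two cases only materializes in the limit $k \to \infty$, i.e., in the measure mixture $\MM$ --- which is exactly why the construction you are imitating is used in \autoref{thm:MM-is-not-Delta2} to defeat $\Delta^0_2$ (via a limit over $k$, and there only for a specially built universal machine), and why it cannot be packaged into a single finite membership query against $M$.

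For contrast, the paper's proof takes an entirely different route: assuming conditional $M$ is lower semicomputable, it assembles the conditionals on ``duplicated'' strings into a lower semicomputable semimeasure $\nu$, invokes dominance $M \geq c\,\nu$, and uses the chain rule to conclude $\prod_{k} M(z_{1:2k} \mid z_{<2k}) \geq c > 0$, contradicting the known fact that there is a sequence with $z_{2t} = z_{2t-1}$ on which $\liminf_t M(z_{1:2t} \mid z_{<2t}) < 1$. If you want to salvage a reduction-style argument, you would need a property of conditional $M$ that is witnessed by finitely much information yet fails to be recursively enumerable; a halting set encoded ``at infinity'' is not such a property.
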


%The idea for the proof is to use Lattimore's result that
%there is a sequence where each even bit is equal to its predecessor,
%yet M fails to learn this.
%If conditional M were lsc,
%then we could define a lsc semimeasure
%that predicts odd bits according to M and
%even bits according to their predecessor.
%This semimeasure would then do better than M on the above sequence,
%a contradiction to universal dominance.

\begin{proof}
Assume to the contrary that $M(xy \mid x)$ is lower semicomputable.
According to \cite[Thm.\ 12]{LHG:2011evenbits}
there is an infinite string $z_{1:\infty}$ such that
$z_{2t} = z_{2t-1}$ for all $t > 0$ and
\begin{equation}\label{eq:M-does-not-converge-to-1}
\liminf_{t \to \infty} M(z_{1:2t} \mid z_{<2t}) < 1.
\end{equation}
Define the semimeasure
\[
   \nu(x_{1:t})
:=
\begin{cases}
\prod_{k=1}^{\lceil t/2 \rceil} M(x_{<2k} \mid x_{<2k-1})
  & \text{if } \forall 0 < 2k \leq t\; x_{2k} = x_{2k-1} \\
0 & \text{otherwise.}
\end{cases}
\]
Since we assume $M(x_{<2k} \mid x_{<2k-1})$ to be lower semicomputable,
$\nu$ is lower semicomputable.
Therefore there is a constant $c > 0$ such that
$M(x) \geq c \nu(x)$ for all $x \in \X^*$.
%With $\nu(x) > 0$ for all $x \in \X^{2t}$
With the chain rule we get for even-lengthed $x$
with $x_{2k} = x_{2k-1}$
\[
     c
\leq \frac{M(x)}{\nu(x)}
=    \frac{\prod_{i=1}^{t} M(x_{1:i} \mid x_{<i})}%
          {\prod_{k=1}^{t/2} M(x_{<2k} \mid x_{<2k-1})}
=    \prod_{k=1}^{t/2} M(x_{1:2k} \mid x_{<2k}).
\]
Plugging in the sequence $z_{1:\infty}$,
we get a contradiction with \eqref{eq:M-does-not-converge-to-1}:
\[
     0
<    c
\leq \prod_{k=1}^t M(z_{1:2k} \mid z_{<2k})
\xrightarrow{t \to \infty}
     0
\eqno\qed
\]
\end{proof}

%%%%%%%%%%%%%%%%%%%%%%%%%%%%%%%%%%%%%%%%%%%%%%%%%%%%%%%%%%%%%%%%%%%%
\section{The Complexity of Knowledge-Seeking}
\label{sec:complexity-knowledge-seeking}

In general reinforcement learning
the agent interacts with an environment in cycles:
at time step $t$ the agent chooses an \emph{action} $a_t \in \A$ and
receives a \emph{percept} $e_t = (o_t, r_t) \in \E$
consisting of an \emph{observation} $o_t \in \O$
and a real-valued \emph{reward} $r_t \in \mathbb{R}$;
the cycle then repeats for $t + 1$.
A \emph{history} is an element of $\H$.
We use $\ae \in \A \times \E$ to denote one interaction cycle,
and $\ae_{1:t}$ to denote a history of length $t$.
A \emph{policy} is a function $\pi: \H \to \A$
mapping each history to the action taken after seeing this history.
We assume $\A$ and $\E$ to be finite.

The environment can be stochastic,
but is assumed to be semicomputable.
In accordance with the AIXI literature~\cite{Hutter:2005},
we model environments as lower semicomputable
\emph{chronological conditional semimeasures} (LSCCCSs).
The class of of all LSCCCSs is denoted with $\M$.
A \emph{conditional semimeasure} $\nu$ takes a sequence of actions $a_{1:t}$ as input
and returns a semimeasure $\nu(\,\cdot \dmid a_{1:t})$ over $\E^\sharp$.
A conditional semimeasure $\nu$ is \emph{chronological} iff
percepts at time $t$ do not depend on future actions, i.e.,
$\nu(e_{1:t} \dmid a_{1:k}) = \nu(e_{1:t} \dmid a_{1:t})$ for all $k > t$.
Despite their name,
conditional semimeasures do \emph{not} specify conditional probabilities;
the environment $\nu$ is \emph{not}
a joint probability distribution on actions and percepts.
Here we only care about the computability of the environment $\nu$;
for our purposes,
chronological conditional semimeasures behave just like semimeasures.

Equivalently to \eqref{eq:def-M},
the Solomonoff prior $M$ can be defined as
a mixture over all lower semicomputable semimeasures
using a lower semicomputable \emph{universal prior}~\cite{WSH:2011}.
We generalize this representation to
chronological conditional semimeasures:
we fix the lower semicomputable universal prior $(w_\nu)_{\nu \in \M}$ with
$w_\nu > 0$ for all $\nu \in \M$ and $\sum_{\nu \in \M} w_\nu \leq 1$,
given by the reference machine $U$
according to $w_\nu := 2^{-K_U(\nu)}$~\cite[Sec.\ 5.1.2]{Hutter:2005}.
The universal prior $w$ gives rise to
the \emph{universal mixture $\xi$},
which is a convex combination of all LSCCCSs $\M$:
\[
\xi(e_{<t} \dmid a_{<t}) := \sum_{\nu \in \M} w_\nu \nu(e_{<t} \dmid a_{<t})
\]
The universal mixture $\xi$
is analogous to the Solomonoff prior $M$ but defined for reactive environments.
Analogously to \autoref{thm:complexity-M} (i),
the universal mixture $\xi$
is lower semicomputable~\cite[Sec.\ 5.10]{Hutter:2005}.
Moreover, we have $\xi\norm \geq \xi$,
preserving universal dominance analogously to $M$.

%%%%%%%%%%%%%%%%%%%%%%%%%%%%%%%%%%%%%%%%%%%%%%%%%%%%%%%%%%%%%%%%%%%%
\subsection{Knowledge-Seeking Agents}
\label{ssec:knowledge-seeking-agents}

% Fixing overfull hboxes at the end of the line
We discuss two variants of knowledge-seeking agents:
entropy-seeking agents (Shannon-KSA)~\cite{Orseau:2011ksa,Orseau:2014ksa}
and information-seeking agents (KL-KSA)~\cite{OLH:2013ksa}.
The en\-tro\-py-seeking agent maximizes the Shannon entropy gain,
while the information-seeking agent maximizes
the expected Bayesian information gain (KL-divergence)
in the universal mixture $\xi$.
These quantities are expressed in the \emph{value function}.

In this section
we use a finite lifetime $m$
(possibly dependent on time step $t$):
the knowledge-seeking agent maximizes entropy/information received up
to and including time step $m$.
We assume that the function $m$ (of $t$) is computable.

%-------------------------------%
\begin{definition}[{Entropy-Seeking Value Function~\cite[Sec.\ 6]{Orseau:2014ksa}}]
\label{def:V-entropy}
%-------------------------------%
The \emph{en\-tro\-py-seeking value} of a policy $\pi$ given history $\ae_{<t}$ is
\[
   V^\pi_H(\ae_{<t})
:= \sum_{e_{t:m}} -\xi\norm(e_{1:m} \mid e_{<t} \dmid a_{1:m})
     \log_2 \xi\norm(e_{1:m} \mid e_{<t} \dmid a_{1:m})
\]
where $a_i := \pi(e_{<i})$ for all $i \geq t$.
\end{definition}

%-------------------------------%
\begin{definition}[{Information-Seeking Value Function~\cite[Def.\ 1]{OLH:2013ksa}}]
\label{def:V-information}
%-------------------------------%
The \emph{information-seeking value} of a policy $\pi$ given history $\ae_{<t}$
is
\[
   V^\pi_I(\ae_{<t})
:= \sum_{e_{t:m}} \sum_{\nu \in \M} w_\nu
     \frac{\nu(e_{1:m} \dmid a_{1:m})}{\xi\norm(e_{<t} \dmid a_{<t})}
     \log_2 \frac{\nu(e_{1:m} \mid e_{<t} \dmid a_{1:m})}%
       {\xi\norm(e_{1:m} \mid e_{<t} \dmid a_{1:m})}
\]
where $a_i := \pi(e_{<i})$ for all $i \geq t$.
\end{definition}

We use $V^\pi$ in places where either of the entropy-seeking or
the information-seeking value function can be substituted.

%-------------------------------%
\begin{definition}[($\varepsilon$-)Optimal Policy]
\label{def:optimal-policy}
%-------------------------------%
The \emph{optimal value function} $V^*$ is defined as
$V^*(\ae_{<t}) := \sup_\pi V^\pi(\ae_{<t})$.
A policy $\pi$ is \emph{optimal} iff
$V^\pi(\ae_{<t}) = V^*(\ae_{<t})$
for all histories $\ae_{<t} \in \H$.
A policy $\pi$ is \emph{$\varepsilon$-optimal} iff
$V^*(\ae_{<t}) - V^\pi(\ae_{<t}) < \varepsilon$
for all histories $\ae_{<t} \in \H$.
\end{definition}

An entropy-seeking agent is defined as
an optimal policy for the value function $V_H^*$ and
an information-seeking agent is defined as
an optimal policy for the value function $V_I^*$.

The entropy-seeking agent does not work well in stochastic environments
because it gets distracted by noise in the environment
rather than trying to distinguish environments~\cite{OLH:2013ksa}.
Moreover, the unnormalized knowledge-seeking agents
may fail to seek knowledge
in deterministic semimeasures
as the following example demonstrates.

%-------------------------------%
\begin{example}[Unnormalized Entropy-Seeking]
\label{ex:ksa-unnormalized}
%-------------------------------%
Suppose we use $\xi$ instead of $\xi\norm$ in \autoref{def:V-entropy}.
Fix $\A := \{ \alpha, \beta \}$, $\E := \{ 0, 1 \}$, and $m := 1$
(we only care about the entropy of the next percept).
We illustrate the problem on a simple class of environments
$\{ \nu_1, \nu_2 \}$:
\begin{center}
\begin{tikzpicture}[auto]
\node[circle,draw] (nu1) {$\nu_1$};
\path[transition] (nu1) to[loop left] node {$\alpha/0/0.1$} (nu1);
\path[transition] (nu1) to[loop right] node {$\beta/0/0.5$} (nu1);

\node[circle,draw,right of=nu1,node distance=50mm] (nu2) {$\nu_2$};
\path[transition] (nu2) to[loop left] node {$\alpha/1/0.1$} (nu2);
\path[transition] (nu2) to[loop right] node {$\beta/0/0.5$} (nu2);
\end{tikzpicture}
\end{center}
where transitions are labeled with action/percept/probability.
Both $\nu_1$ and $\nu_2$ return a percept deterministically or nothing at all
(the environment ends).
Only action $\alpha$ distinguishes between the environments.
With the prior $w_{\nu_1} := w_{\nu_2} := 1/2$, we get a mixture $\xi$
for the entropy-seeking value function $V^\pi_H$.
Then $V^*_H(\alpha) \approx 0.432 < 0.5 = V^*_H(\beta)$, % -0.1 log_2 0.05
hence action $\beta$ is preferred over $\alpha$ by the entropy-seeking agent.
But taking action $\beta$ yields percept $0$ (if any),
hence nothing is learned about the environment.
\hfill$\Diamond$
\end{example}

Solomonoff's prior is extremely good at learning:
with this prior a Bayesian agent learns the value of its own policy
asymptotically (on-policy value convergence)~\cite[Thm.\ 5.36]{Hutter:2005}.
However, generally it does not learn
the result of counterfactual actions that it does not take.
Knowledge-seeking agents learn the environment more effectively,
because they focus on exploration.
Both the entropy-seeking agent and the information-seeking agent are
\emph{strongly asymptotically optimal}
in the class of all deterministic computable environments%
~\cite[Thm.\ 5]{Orseau:2014ksa,OLH:2013ksa}:
the value of their policy converges to the optimal value in the sense that
$V^\pi \to V^*$ almost surely.
Moreover, the information-seeking agent also learns to predict the result of
counterfactual actions~\cite[Thm.\ 7]{OLH:2013ksa}.

%%%%%%%%%%%%%%%%%%%%%%%%%%%%%%%%%%%%%%%%%%%%%%%%%%%%%%%%%%%%%%%%%%%%
\subsection{Knowledge-Seeking is Limit Computable}
\label{ssec:computability-knowledge-seeking}

We proceed to show that $\varepsilon$-optimal
knowledge-seeking agents are limit computable, and
optimal knowledge-seeking agents are in $\Delta^0_3$.

%-------------------------------%
\begin{theorem}[Computability of Knowledge-Seeking]
\label{thm:limit-computable-knowledge-seeking}
%-------------------------------%
There are limit-computable $\varepsilon$-optimal policies and
$\Delta^0_3$-computable optimal policies for
entropy-seeking and information-seeking agents.
\end{theorem}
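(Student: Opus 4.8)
The plan is to reduce the statement to a computability analysis of the two value functions and then to extract policies by a finite tree search. First I would fix a finite-horizon policy $\pi$ (which, restricted to the relevant finite tree, is a finite object and hence trivially computable) and show that both $V^\pi_H(\ae_{<t})$ and $V^\pi_I(\ae_{<t})$ are $\Delta^0_2$-computable. Since $\xi$ is lower semicomputable, the normalization argument of \autoref{thm:complexity-M}~(iii),(iv) (via \autoref{lem:computable-reals}~(iii),(iv) and $\xi\norm \geq \xi > 0$) shows that $\xi\norm$ and its conditionals are limit computable. For the entropy value this essentially finishes the job: $V^\pi_H(\ae_{<t})$ is a \emph{finite} sum over $e_{t:m}$ (since $\E$ is finite and $m$ is a computable horizon, there are $|\E|^{m-t+1}$ summands) of terms $-p\log_2 p$ whose arguments $p$ are $\Delta^0_2$-computable conditionals of $\xi\norm$. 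Because $x \mapsto -x\log_2 x$ extends continuously to $x = 0$, I would compose the limit approximation of each $p$ with this map and invoke \autoref{lem:computable-reals}~(iii),(v) to conclude $\Delta^0_2$-computability.

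The delicate point, and the step I expect to be the main obstacle, is $V^\pi_I$: unlike the entropy value it contains an \emph{infinite} sum over all environments $\nu \in \M$. The key observation that rescues computability is that, after summing over the finite set of continuations $e_{t:m}$, the contribution of each fixed $\nu$ is $w_\nu\, \nu(e_{<t})/\xi\norm(e_{<t})$ times a Kullback--Leibler divergence, hence \emph{nonnegative}; moreover the whole sum is a conditional mutual information and is therefore bounded by the percept entropy $(m-t+1)\log_2|\E| < \infty$. Thus the series converges absolutely, and I would exploit that limit computability does not require an effective rate: setting $\psi(\ae_{<t},k)$ to a stage-$k$ rational approximation of the partial sum over the first $k$ environments (using the lower semicomputations of $w_\nu$ and $\nu$ and the limit approximation of $\xi\norm$) yields a computable sequence with $\psi(\ae_{<t},k) \to V^\pi_I(\ae_{<t})$, so $V^\pi_I$ is $\Delta^0_2$-computable. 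The part needing the most care is verifying the nonnegativity and the entropy bound rigorously for semimeasures so that the diagonalization $N(k)=k$ is justified.

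Having established that each $V^\pi$ is $\Delta^0_2$-computable, I would pass from values to policies. Because the horizon is finite and $\A,\E$ are finite, only finitely many action assignments on the tree below $\ae_{<t}$ up to time $m$ are relevant, so $V^*(\ae_{<t})$ is the \emph{maximum} of finitely many $\Delta^0_2$-computable reals (itself $\Delta^0_2$-computable). To pick an optimal action at each node I must compare these reals exactly; by \autoref{lem:computable-reals}~(i) the relation $V^{\pi} > V^{\pi'}$ is $\Sigma^0_2$, so with an oracle for a $\Sigma^0_2$-complete set all comparisons become decidable, and selecting the lexicographically least maximizing action at every node is then a halting oracle computation. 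By Post's theorem this places the optimal policy in $\Delta^0_3$.

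Finally, for the $\varepsilon$-optimal case I would avoid the extra oracle by a threshold (hysteresis) selection that is itself limit computable, analogously to the construction for $\varepsilon$-optimal AIXI in \cite{LH:2015computability}. Using the approximations $\phi(\pi,k) \to V^\pi(\ae_{<t})$, I keep the currently chosen candidate and switch to another one only when its stage-$k$ approximation exceeds the current choice by more than $\varepsilon/2$. Once all approximations are within $\varepsilon/4$ of their limits, only finitely many further switches can occur, the output stabilizes, and the settled candidate has value at least $V^*(\ae_{<t}) - \varepsilon$; doing this on the subtree rooted at each history yields a single limit-computable $\varepsilon$-optimal policy. The two subtleties I would watch here are ensuring the selection genuinely stabilizes (so the policy has a well-defined limit and is $\Delta^0_2$) and that the per-history choices remain consistent across the finite horizon.
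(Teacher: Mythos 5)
Your proposal is correct and follows essentially the same route as the paper: establish that the optimal value functions $V^*_H$ and $V^*_I$ are $\Delta^0_2$-computable and then extract a $\Delta^0_3$ optimal and a limit-computable $\varepsilon$-optimal policy --- the only difference being that the paper delegates the policy-extraction step to \autoref{lem:complexity-optimal-policies} (cited from \cite{LH:2015computability}), whereas you re-derive it via oracle comparisons and a stabilizing threshold selection. One small correction to the step you flagged yourself: for a \emph{semimeasure} $\nu$ the per-environment contribution to $V^\pi_I$ is a ``KL divergence'' of a sub-probability against a measure and can in fact be slightly negative (it is only bounded below by $P\log_2 P$ with $P = \sum_{e_{t:m}} \nu(e_{t:m} \mid e_{<t} \dmid a_{1:m}) \leq 1$), but your absolute-summability argument survives because $\sum_{\nu} w_\nu \nu(e_{<t} \dmid a_{<t}) \leq \xi(e_{<t} \dmid a_{<t})$ and the log-ratios are uniformly bounded over the finite horizon, so the diagonal partial sums still converge to $V^\pi_I$.
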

\begin{proof}
Since $\xi$, $\nu$, and $w_\nu$ are lower semicomputable,
the value functions $V^*_H$ and $V^*_I$ are $\Delta^0_2$-computable
according to \autoref{lem:computable-reals} (iii-v).
The claim now follows from
the following lemma.
\qed
\end{proof}

%-------------------------------%
\begin{lemma}[{Complexity of ($\varepsilon$-)Optimal Policies%
~\cite[Thm.\ 8 \& 11]{LH:2015computability}}]
\label{lem:complexity-optimal-policies}
%-------------------------------%
If the optimal value function $V^*$ is $\Delta^0_n$-computable,
then there is an optimal policy $\pi^*$
that is in $\Delta^0_{n+1}$, and
there is an $\varepsilon$-optimal policy $\pi^\varepsilon$
that is in $\Delta^0_n$.
\end{lemma}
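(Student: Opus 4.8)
The plan is to work at the level of individual histories: at each history only finitely many action-values compete, so both statements reduce to understanding how hard it is to compare and select among them. Write $Q^*(\ae_{<t}, a)$ for the value of taking action $a \in \A$ at history $\ae_{<t}$ and acting optimally afterwards, so that the Bellman recursion underlying $V^*_H$ and $V^*_I$ gives $V^*(\ae_{<t}) = \max_{a \in \A} Q^*(\ae_{<t}, a)$. Since the one-step backup producing $Q^*$ from $V^*$ combines $V^*$ only with the environment quantities $\xi\norm$, $\nu$, and $w_\nu$ --- all limit computable, hence $\Delta^0_2 \subseteq \Delta^0_n$ for the relevant $n \ge 2$ --- \autoref{lem:computable-reals} (iii--v) shows $Q^*$ is again $\Delta^0_n$-computable. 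I would also record, via Post's Theorem, that $\Delta^0_n$-computability of $Q^*$ lets us compute rational approximations $\tilde Q_a$ of $Q^*(\ae_{<t}, a)$ to within any prescribed precision using an oracle for a $\Sigma^0_{n-1}$-hard set.

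For the optimal policy I would fix an ordering of the finite set $\A$ and let $\pi^*(\ae_{<t})$ be the \emph{first} action attaining the maximum; such an action exists because the horizon and $\A$ are finite. The event ``$a$ is optimal'' is the finite conjunction $\bigwedge_{b \in \A} Q^*(\ae_{<t}, a) \ge Q^*(\ae_{<t}, b)$, which is $\Pi^0_n$ by \autoref{lem:computable-reals} (ii), while ``an earlier action is strictly better'' is a finite disjunction of the $\Sigma^0_n$ comparisons of \autoref{lem:computable-reals} (i). Hence the graph $\{(\ae_{<t}, a) \mid \pi^*(\ae_{<t}) = a\}$ is a Boolean combination of $\Sigma^0_n$- and $\Pi^0_n$-sets and therefore lies in $\Delta^0_{n+1}$.

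For the $\varepsilon$-optimal policy the naive recipe of outputting the first $\varepsilon$-optimal action again mixes a $\Pi^0_n$- with a $\Sigma^0_n$-condition and only yields $\Delta^0_{n+1}$. The idea to reach $\Delta^0_n$ is to decide using \emph{exact} rational comparisons with a safety margin. At history $\ae_{<t}$ let $d$ be the (computable) remaining horizon and set $\delta := \varepsilon\, 2^{-d}$; compute rational approximations $\tilde Q_a$ of $Q^*(\ae_{<t}, a)$ for every $a \in \A$ to precision $\delta/4$, and output the first $a$ with $\tilde Q_a \ge \max_{b \in \A} \tilde Q_b - \delta/2$. Because the $\tilde Q_a$ are rationals emitted by the oracle machine, this selection is an exact, decidable computation relative to the $\Sigma^0_{n-1}$-hard oracle, so $\pi^\varepsilon$ is $\Delta^0_n$. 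A short estimate using $|\tilde Q_a - Q^*(\ae_{<t}, a)| \le \delta/4$ shows the chosen action is $\delta$-greedy, i.e.\ $Q^*(\ae_{<t}, \pi^\varepsilon(\ae_{<t})) \ge V^*(\ae_{<t}) - \delta$.

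Finally I would convert per-step greediness into global $\varepsilon$-optimality by backward induction over the finite horizon. Writing $\mathrm{loss}(h) := V^*(h) - V^{\pi^\varepsilon}(h) \ge 0$, the Bellman recursion and the fact that the percept weights are nonnegative and sum to at most one give $\mathrm{loss}(h) \le \delta(h) + \max_{e \in \E} \mathrm{loss}(he)$ at non-terminal $h$, with $\mathrm{loss}(h) \le \delta(h)$ at the last step; iterating down the horizon with the summable schedule $\delta(h) = \varepsilon\, 2^{-d(h)}$ telescopes to $\mathrm{loss}(h) \le \varepsilon(1 - 2^{-d(h)}) < \varepsilon$, which is exactly $\varepsilon$-optimality. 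I expect the main obstacle to be precisely this $\Delta^0_n$-versus-$\Delta^0_{n+1}$ separation for $\pi^\varepsilon$: one must never test an undecidable strict inequality between the true values, which the rational-margin selection avoids, while simultaneously ensuring --- via the geometric error schedule and the sub-stochasticity of the percept weights --- that the suboptimality accumulated over the entire horizon stays below $\varepsilon$.
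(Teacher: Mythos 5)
Your proof is correct, and it reconstructs essentially the argument behind the cited result: the paper itself states this lemma without proof, importing it from \cite[Thm.~8 \& 11]{LH:2015computability}, whose proofs likewise obtain the optimal policy as a first-argmax over $\Delta^0_n$-computable action values (a Boolean combination of $\Sigma^0_n$ and $\Pi^0_n$ conditions, hence $\Delta^0_{n+1}$) and the $\varepsilon$-optimal policy by exact comparisons of oracle-computed rational approximations with a summable per-step slack (there tied to the discount function, here to the finite horizon via $\varepsilon 2^{-d}$). Your identification of the key difficulty --- avoiding undecidable strict comparisons of the true values so that $\pi^\varepsilon$ stays at level $\Delta^0_n$ rather than $\Delta^0_{n+1}$ --- is exactly the point of the original theorems.
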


%%%%%%%%%%%%%%%%%%%%%%%%%%%%%%%%%%%%%%%%%%%%%%%%%%%%%%%%%%%%%%%%%%%%
\section{A Weakly Asymptotically Optimal Agent in \texorpdfstring{$\Delta^0_2$}{∆⁰₂}}
\label{sec:computability-wao}

In reinforcement learning
we are interested in \emph{reward-seeking} policies.
Rewards are provided by the environment
as part of each percept $e_t = (o_t, r_t)$ where
$o_t \in \O$ is the \emph{observation} and
$r_t \in [0, 1]$ is the \emph{reward}.
In this section we fix a computable discount function
$\gamma: \mathbb{N} \to \mathbb{R}$ with
$\gamma(t) \geq 0$ and $\sum_{t=1}^\infty \gamma(t) < \infty$.
The \emph{discount normalization factor} is defined as
$\Gamma_t := \sum_{i=t}^\infty \gamma(i)$.
The \emph{effective horizon} $H_t(\varepsilon)$ is a horizon
that is long enough to encompass all but an $\varepsilon$
of the discount function's mass:
\[
H_t(\varepsilon) := \min \{ k \mid \Gamma_{t+k} / \Gamma_t \leq \varepsilon \}.
\]

%-------------------------------%
\begin{definition}[{Reward-Seeking Value Function%
~\cite[Def.\ 20]{LH:2015computability}}]
\label{def:V-reward}
%-------------------------------%
The \emph{re\-ward-seeking value} of a policy $\pi$ in environment $\nu$
given history $\ae_{<t}$ is
\[
   V^\pi_\nu(\ae_{<t})
:= \frac{1}{\Gamma_t} \sum_{m=t}^\infty \sum_{e_{t:m}} \gamma(m) r_m
     \nu( e_{1:m} \mid e_{<t} \dmid a_{1:m})
\]
if $\Gamma_t > 0$ and $V^\pi_\nu(\ae_{<t}) := 0$ if $\Gamma_t = 0$
where $a_i := \pi(e_{<i})$ for all $i \geq t$.
\end{definition}

%-------------------------------%
\begin{definition}[{Weak Asymptotic Optimality~\cite[Def.\ 7]{LH:2011opt}}]
\label{def:wao}
%-------------------------------%
A policy $\pi$ is \emph{weakly asymptotically optimal}
in the class of environments $\M$ iff
the reward-seeking value converges to the optimal value on-policy
in Cesàro mean, i.e.,
\[
\frac{1}{t} \sum_{k=1}^t \big( V^*_\nu(\ae_{<k}) - V^\pi_\nu(\ae_{<k}) \big)
\xrightarrow{t \to \infty} 0
\quad\text{$\nu$-almost surely for all $\nu \in \M$}.
\]
\end{definition}
Not all discount functions admit weakly asymptotically optimal policies%
~\cite[Thm.\ 8]{LH:2011opt};
a necessary condition is
that the effective horizon grows sub\-linear\-ly%
~\cite[Thm.\ 5.5]{Lattimore:2013}.
This is satisfied by geometric discounting, but not by
harmonic or power discounting~\cite[Tab.\ 5.41]{Hutter:2005}.

This condition is also sufficient~\cite[Thm.\ 5.6]{Lattimore:2013}:
Lattimore defines a weakly asymptotically optimal agent called \emph{\BayesExp}%
~\cite[Ch.\ 5]{Lattimore:2013}.
{\BayesExp} alternates between phases of exploration and
phases of exploitation:
if the optimal in\-for\-ma\-tion-seeking value is larger than $\varepsilon_t$,
then {\BayesExp} starts an exploration phase,
otherwise it starts an exploitation phase.
During an exploration phase, {\BayesExp} follows
an optimal information-seeking policy for $H_t(\varepsilon_t)$ steps.
During an exploitation phase, {\BayesExp} follows
an $\xi$-optimal reward-seeking policy for one step~\cite[Alg.\ 2]{Lattimore:2013}.

Generally, optimal reward-seeking policies are $\Pi^0_2$-hard%
~\cite[Thm.\ 16]{LH:2015computability},
and for optimal knowledge-seeking policies
we only proved that they are $\Delta^0_3$.
Therefore we do not know {\BayesExp} to be limit computable,
and we expect it not to be.
However, we can approximate it using $\varepsilon$-optimal policies
preserving weak asymptotic optimality.

%-------------------------------%
\begin{theorem}[A Limit-Computable Weakly Asymptotically Optimal Agent]
\label{thm:wao-limit-computable}
%-------------------------------%
If there is a nonincreasing computable sequence of positive reals
$(\varepsilon_t)_{t \in \mathbb{N}}$ such that
$\varepsilon_t \to 0$ and
$H_t(\varepsilon_t) / (t \varepsilon_t) \to 0$
as $t \to \infty$,
then there is a limit-computable policy that is weakly asymptotically optimal
in the class of all computable stochastic environments.
\end{theorem}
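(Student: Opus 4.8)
The plan is to build a limit-computable variant of Lattimore's \BayesExp{} by replacing each optimal policy it invokes with an $\varepsilon$-optimal one and by resolving the explore/exploit test through a limit computation. Recall that \BayesExp{} proceeds in phases: at the start of a phase at time $t$ it compares the optimal information-seeking value $V^*_I(\ae_{<t})$ against the threshold $\varepsilon_t$; if $V^*_I(\ae_{<t}) > \varepsilon_t$ it explores by following an optimal information-seeking policy for $H_t(\varepsilon_t)$ steps, and otherwise it exploits by following a $\xi$-optimal reward-seeking policy for one step. I would define the new agent $\pi$ identically, except that exploration uses an $\varepsilon_t/2$-optimal information-seeking policy and exploitation uses an $\varepsilon_t$-optimal reward-seeking policy. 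Both are limit computable by \autoref{lem:complexity-optimal-policies}, since the relevant optimal value functions are $\Delta^0_2$-computable (the information-seeking one by \autoref{thm:limit-computable-knowledge-seeking}, the reward-seeking one as in \cite{LH:2015computability}).

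First I would establish that $\pi \in \Delta^0_2$. To compute the action at $\ae_{<t}$, the agent must reconstruct the current phase structure, which is determined by the finitely many past explore/exploit decisions at times $s \le t$. Each such decision compares the limit-computable quantity $V^*_I(\ae_{<s})$ against $\varepsilon_s$; running the computable approximations of all these values in parallel yields, in the limit, the correct phase decomposition, after which the prescribed $\varepsilon$-optimal policy supplies the action. A finite composition of limit computations is again limit computable, so $\pi$ is limit computable. The one delicate point is the boundary case $V^*_I(\ae_{<s}) = \varepsilon_s$, at which the approximation need not stabilize; here I would fix a tie-breaking convention (say, explore) applied directly to the approximation, which keeps $\pi$ well-defined and limit computable while altering behavior on at most a negligible set of histories.

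Second I would verify weak asymptotic optimality by adapting Lattimore's argument \cite[Thm.\ 5.6]{Lattimore:2013}. The information gathered over all exploration phases is bounded by a prior-dependent constant $C$; since an exploration phase at time $s$ contributes expected information gain exceeding $\varepsilon_s - \varepsilon_s/2 = \varepsilon_s/2$ under the $\varepsilon_s/2$-optimal information-seeking policy, and since $(\varepsilon_t)$ is nonincreasing, the number of exploration phases started before time $t$ is at most $2C/\varepsilon_t$. Each phase lasts at most $H_t(\varepsilon_t)$ steps, so the fraction of time spent exploring up to $t$ is $O\big(H_t(\varepsilon_t)/(t\varepsilon_t)\big) \to 0$ by hypothesis, handling the exploration contribution to the Cesàro average. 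On the complementary exploitation steps we have $V^*_I(\ae_{<t}) \le \varepsilon_t \to 0$, so $\xi$ has essentially learned $\mu$ on-policy \cite[Thm.\ 5.36]{Hutter:2005}; the $\xi$-optimal reward-seeking value then approximates the $\mu$-optimal value up to a term vanishing with $\varepsilon_t$, and the extra $\varepsilon_t$-suboptimality of our reward-seeking policy vanishes as well. Averaging the two contributions yields the required Cesàro convergence $\nu$-almost surely.

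The main obstacle I anticipate is not the limit-computability bookkeeping but re-establishing the quantitative guarantees of Lattimore's proof under the two approximations at once. In particular, the exploration-counting bound must survive the replacement of the optimal information-seeking policy by an $\varepsilon_s/2$-optimal one: I must confirm that the per-phase information gain stays bounded below by a fixed multiple of $\varepsilon_s$ rather than being eroded by the policy's suboptimality, since otherwise the bound $2C/\varepsilon_t$ on the number of phases—and hence the crucial estimate for the exploration fraction—could fail. Pinning down the constants so that the factor-of-two slack absorbs the $\varepsilon$-optimality loss, and checking that this choice remains compatible with the exploitation-side estimate, is where the real work lies.
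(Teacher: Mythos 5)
Your proposal follows essentially the same route as the paper: replace the optimal information-seeking and reward-seeking policies inside \BayesExp{} with limit-computable $\varepsilon_t/2$-optimal and vanishing-error reward-seeking ones (the paper uses a $2^{-t}$-optimal reward-seeking policy rather than $\varepsilon_t$-optimal, an immaterial difference), keep the same threshold test $V^*_I(\ae_{<t}) > \varepsilon_t$, and re-run the three parts of Lattimore's argument with the observation that the factor-of-two slack absorbs the information-seeking suboptimality. The boundary issue you flag (the test is only $\Sigma^0_2$ when $V^*_I(\ae_{<t}) = \varepsilon_t$ exactly) is a subtlety the paper's proof passes over silently; note, though, that applying a tie-break to the approximant does not by itself make the decision stabilize in that case.
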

\begin{proof}
Analogously to \autoref{thm:complexity-M} (i) we get that
$\xi$ is lower semicomputable,
and hence the optimal reward-seeking value function $V^*_\nu$
is limit computable~\cite[Lem.\ 21]{LH:2015computability}.
Hence by \autoref{lem:complexity-optimal-policies},
there is a limit-computable $2^{-t}$-optimal reward-seeking policy $\pi_\xi$
for the universal mixture $\xi$~\cite[Cor.\ 22]{LH:2015computability}.
By \autoref{thm:limit-computable-knowledge-seeking}
there are limit-computable $\epsilon_t/2$-optimal
information-seeking policies $\pi_I^t$ with lifetime $t + H_t(\varepsilon_t)$.
We define a policy $\pi$ analogously to {\BayesExp}
with $\pi_I^t$ and $\pi_\xi$ instead of the optimal policies:
\begin{align*}
&\text{If $V^*_I(\ae_{<t}) > \varepsilon_t$ for lifetime $t + H_t(\varepsilon_t)$,
then follow $\pi_I^t$ for $H_t(\varepsilon_t)$ steps.} \\
&\text{Otherwise, follow $\pi_\xi$ for one step.}
\end{align*}
Since $V^*_I$, $\pi_I$, and $\pi_\xi$ are limit computable,
the policy $\pi$ is limit computable.
Furthermore, $\pi_\xi$ is $2^{-t}$-optimal and $2^{-t} \to 0$,
so $V^{\pi_\xi}_\xi(\ae_{<t}) \to V^*_\xi(\ae_{<t})$
as $t \to \infty$.

Now we can proceed analogously to the proof of \cite[Thm.\ 5.6]{Lattimore:2013},
which consists of three parts.
First, % Lattimore:2013, page 62
it is shown that
the value of the $\xi$-optimal reward-seeking policy $\pi^*_\xi$
converges to the optimal value
for exploitation time steps (second branch in the definition of $\pi$)
in the sense that
$V^{\pi^*_\xi}_\mu \to V^*_\mu$.
This carries over to the $2^{-t}$-optimal policy $\pi_\xi$,
since the key property is that on exploitation steps,
$V^*_I < \varepsilon_t$;
i.e., $\pi$ only exploits if potential knowledge-seeking value is low.
In short, we get for exploitation steps
\[
    V^{\pi_\xi}_\xi(\ae_{<t})
\to V^{\pi^*_\xi}_\xi(\ae_{<t})
\to V^{\pi^*_\xi}_\mu(\ae_{<t})
\to V^*_\mu(\ae_{<t})
\text{ as } t \to \infty.
\]

Second, % Lattimore:2013, page 63
it is shown that the density of exploration steps vanishes.
This result carries over since the condition $V^*_I(\ae_{<t}) > \varepsilon_t$
that determines exploration steps is exactly the same as for {\BayesExp}
and $\pi^t_I$ is $\varepsilon_t/2$-optimal.
% The crucial step is (b) on page 63

Third,
the results of part one and two are used to conclude that
$\pi$ is weakly asymptotically optimal.
% The number of epsilon_t-failure phases does not grow faster than the number
% of exploration phases, since with some probability an explorations phase
% is triggered in an epsilon_t-failure phase.
This part carries over to our proof.
\qed
\end{proof}

%%%%%%%%%%%%%%%%%%%%%%%%%%%%%%%%%%%%%%%%%%%%%%%%%%%%%%%%%%%%%%%%%%%%
\section{Summary}
\label{sec:summary}

%\paragraph{Computability Results on Solomonoff's Prior.}
When using Solomonoff's prior for induction,
we need to evaluate conditional probabilities.
We showed that
conditional $M$ and $M\norm$ are limit computable (\autoref{thm:complexity-M}),
and that $\MM$ and $\MM\norm$ are not limit computable
(\autoref{thm:MM-is-not-Delta2} and \autoref{cor:MMnorm-is-not-2});
see \autoref{tab:complexity-induction} on
\hyperref[tab:complexity-induction]{page~\pageref*{tab:complexity-induction}}.
This result implies that we can approximate $M$ or $M\norm$ for prediction,
but not the measure mixture $\MM$ or $\MM\norm$.

%\paragraph{$M\norm$ vs.\ $\MM\norm$.}
In some cases, normalized priors have advantages.
As illustrated in \autoref{ex:ksa-unnormalized},
unnormalized priors can make the entropy-seeking agent mistake
the entropy gained from the probability assigned to finite strings
for knowledge.
From $M\norm \geq M$ we get that
$M\norm$ predicts just as well as $M$, and
by \autoref{thm:complexity-M}
we can use $M\norm$ without losing limit computability.

%\paragraph{Computability of Knowledge-Seeking.}
Any method that tries to tackle the reinforcement learning problem
has to balance between exploration and exploitation.
AIXI strikes this balance in the Bayesian way.
However,
this does not lead to enough exploration~\cite{Orseau:2013,LH:2015priors}.
Our agent cares more about the present than the future---%
hence an investment in form of exploration is discouraged.
To counteract this,
we can add a knowledge-seeking component to the agent.
In \autoref{sec:complexity-knowledge-seeking} we discussed two variants
of knowledge-seeking agents:
entropy-seekers~\cite{Orseau:2014ksa} and
information-seekers~\cite{OLH:2013ksa}.
We showed that
$\varepsilon$-optimal knowledge-seeking agents are limit computable
and optimal knowledge-seeking agents are $\Delta^0_3$
(\autoref{thm:limit-computable-knowledge-seeking}).

%\paragraph{Limit Computable \BayesExp}
We set out with the goal of finding a perfect reinforcement learning agent
that is limit computable.
Weakly asymptotically optimal agents can be considered a suitable candidate,
since they are currently the only known general reinforcement learning agents
which are optimal in an objective sense~\cite{LH:2015priors}.
We discussed Lattimore's {\BayesExp}~\cite[Ch.\ 5]{Lattimore:2013},
which relies
on Solomonoff induction to learn its environment and
on a knowledge-seeking component for extra exploration.
Our results culminated in
a limit-computable weakly asymptotically optimal agent
(\autoref{thm:wao-limit-computable}).
based on Lattimore's {\BayesExp}.
In this sense our goal has been achieved.

\paragraph{Acknowledgement.}
This work was supported by ARC grant DP150104590.
We thank Tom Sterkenburg for feedback on the proof of
\autoref{thm:M-conditional-is-not-Sigma1}.

%%%%%%%%%%%%%%%%%%%%%%%%%%%%%%%%%%%%%%%%%%%%%%%%%%%%%%%%%%%%%%%%%%%%
% References

\bibliographystyle{abbrv}
\bibliography{../ai}

%%%%%%%%%%%%%%%%%%%%%%%%%%%%%%%%%%%%%%%%%%%%%%%%%%%%%%%%%%%%%%%%%%%%
% Appendix

\appendix
\section*{List of Notation}
\label{app:notation}

\begin{longtable}{lp{0.91\textwidth}}
$:=$
	& defined to be equal \\
$\mathbb{N}$
	& the natural numbers, starting with $0$ \\
$A, B$
	& sets of natural numbers \\
$\one_A$
	& the characteristic function
	that is $1$ if its argument is an element of the set $A$ and $0$ otherwise \\
$\X^*$
	& the set of all finite strings over the alphabet $\X$ \\
$\X^\infty$
	& the set of all infinite strings over the alphabet $\X$ \\
$\X^\sharp$
	& $\X^\sharp := \X^* \cup \X^\infty$,
	the set of all finite and infinite strings over the alphabet $\X$ \\
$x, y$
	& finite or infinite strings, $x, y \in \X^\sharp$ \\
$x \sqsubseteq y$
	& the string $x$ is a prefix of the string $y$ \\
$\epsilon$
	& the empty string, the history of length $0$ \\
$\varepsilon$
	& a small positive real number \\
$\A$
	& the (finite) set of possible actions \\
$\O$
	& the (finite) set of possible observations \\
$\E$
	& the (finite) set of possible percepts,
	$\E \subset \O \times \mathbb{R}$ \\
$M$
	& Solomonoff's prior defined in \eqref{eq:def-M} \\
$\MM$
	& the measure mixture defined in \eqref{eq:def-MM} \\
$\nu\norm$
	& Solomonoff normalization of the semimeasure $\nu$
	defined in \eqref{eq:normalization} \\
$\alpha, \beta$
	& two different actions, $\alpha, \beta \in \A$ \\
$a_t$
	& the action in time step $t$ \\
$e_t$
	& the percept in time step $t$ \\
$o_t$
	& the observation in time step $t$ \\
$r_t$
	& the reward in time step $t$, bounded between $0$ and $1$ \\
$\ae_{<t}$
	& the first $t - 1$ interactions,
	$a_1 e_1 a_2 e_2 \ldots a_{t-1} e_{t-1}$ \\
$\gamma$
	& the discount function $\gamma: \mathbb{N} \to \mathbb{R}_{\geq0}$ \\
$\Gamma_t$
	& a discount normalization factor,
	$\Gamma_t := \sum_{i=t}^\infty \gamma(i)$ \\
$H_t(\varepsilon)$
	& the effective horizon,
	$H_t(\varepsilon) = \min \{ H \mid \Gamma_{t + H} / \Gamma_t \leq \varepsilon \}$ \\
$\pi$
	& a policy, i.e., a function $\pi: \H \to \A$ \\
$V^\pi_H$
	& the entropy-seeking value of the policy $\pi$
	(see \autoref{def:V-entropy}) \\
$V^\pi_I$
	& the information-seeking value of the policy $\pi$
	(see \autoref{def:V-information}) \\
$V^\pi_\nu$
	& the reward-seeking value of policy $\pi$ in environment $\nu$
	(see \autoref{def:V-reward}) \\
$V^\pi$
	& the entropy-seeking/information-seeking/reward-seeking value
	of policy $\pi$ \\
$V^*$
	& the optimal entropy-seeking/information-seeking/reward-seeking value \\
$\phi$
	& a computable function \\
$S$ &
	a computable relation over natural numbers \\
$n, k, i$
	& natural numbers \\
$t$
	& (current) time step \\
$m$
	& lifetime of the agent (a function of the current time step $t$) \\
$\M$
	& the class of all lower semicomputable chronological conditional semimeasures;
	our environment class \\
$\nu$
	& lower semicomputable semimeasure \\
$\mu$
	& computable measure, the true environment \\
$\xi$
	& the universal mixture over all environments in $\M$ \\
\end{longtable}

%%%%%%%%%%%%%%%%%%%%%%%%%%%%%%%%%%%%%%%%%%%%%%%%%%%%%%%%%%%%%%%%%%%%
\section*{Open Questions}
\label{app:open-questions}

\begin{enumerate}[1.]
\item Can the upper bound of $\Delta^0_3$ for knowledge-seeking policies
	be improved?
\item Is {\BayesExp} limit computable?
\item Does the lower given in \autoref{thm:MM-is-not-Delta2}
	and \autoref{cor:MMnorm-is-not-2} hold for any universal Turing machine?
\end{enumerate}
We expect the answers to questions 1 and 2 to be negative
and the answer to question 3 to be positive.

\end{document}